\DeclareMathOperator{\E}{\mathbb{E}}
\newcommand{\Loss}{\mathcal{L}}
\newcommand{\KL}{\mathit{KL}}
\newcommand{\MI}{\mathit{MI}}
\DeclarePairedDelimiterX{\KLdel}[2]{(}{)}{%
  #1\;\delimsize\|\;#2%
}
\newcommand{\mb}[1]{\mathbf{#1}}
\newtheorem{lemma}{Lemma}[section]
\icmltitlerunning{VMI-VAE}
\begin{document}

\twocolumn[
\icmltitle{VMI-VAE: Variational Mutual Information Maximization Framework for VAE With Discrete and Continuous Priors}

\begin{icmlauthorlist}
\icmlauthor{Andriy Serdega}{ai}
\icmlauthor{Dae-Shik Kim}{ee}
\end{icmlauthorlist}

\icmlaffiliation{ai}{Graduate School of AI, KAIST}
\icmlaffiliation{ee}{School of Electrical Engineering, KAIST}

\icmlcorrespondingauthor{Andriy Serdega}{a.serdega@kaist.ac.kr},
\icmlcorrespondingauthor{Dae-Shik Kim}{daeshik@kaist.ac.kr}

\icmlkeywords{Machine Learning}

\vskip 0.3in
]
\printAffiliationsAndNotice{} 

\begin{abstract}
Variational Autoencoder is a scalable method for learning latent variable models of complex data. It employs a clear objective that can be easily optimized. However, it does not explicitly measure the quality of learned representations. We propose a Variational Mutual Information Maximization Framework for VAE to address this issue. It provides an objective that maximizes the mutual information between latent codes and observations. The objective acts as a regularizer that forces VAE to not ignore the latent code and allows one to select particular components of it to be most informative with respect to the observations. On top of that, the proposed framework provides a way to evaluate mutual information between latent codes and observations for a fixed VAE model.
\end{abstract}


\section{Introduction}
\label{intro}

Finding a proper data representation can be a crucial part of a given machine learning approach. In many cases, when there is a need for inferring property from a sample, it is the main purpose of the method. For instance, classification task aims to discover such data representation that has a useful high-level interpretation for a human such as class. Unsupervised learning aims to find patterns in unlabeled data that can somehow help to describe it and/or perform a relevant task. Recent deep neural network approaches tackle this problem from a perspective of representation learning, where the goal is to learn a representation that captures some semantic properties of data. If learned representations of salient properties are interpretable and disentangled it would improve generalization and make the downstream tasks robust and easier~\cite{lake2016building}. 

Over the last decade, generative models have become popular in unsupervised learning research. The intuition is that by generative modeling it may be possible to discover latent representations and their relation to observations. The two most popular training frameworks for such models are Generative Adversarial Networks (GAN) \cite{goodfellow2014generative} and Variational Autoencoders \cite{kingma2013auto, rezende2014stochastic}. The latter is a powerful method for unsupervised learning of directed probabilistic latent variable models. Training a model within Variational Autoencoder (VAE) framework allows performing both tasks of inference and generation. This model is trained by maximizing the evidence lower bound (ELBO) which is a clear objective and results in stable training. However, the latent variable in ELBO is marginalized, thus it does not assess the ability of the model to do inference and the quality of latent code \cite{huszar2017maximum,alemi2018fixing}. Therefore, having a high ELBO does not necessarily mean that useful latent representations were learned. Moreover, powerful enough decoder can ignore the conditioning on latent code \cite{bowman2016generating,chen2016variational}.

The key idea of our approach is to maximize mutual information (MI) between samples from the posterior distribution (represented by the encoder) and observations. Unfortunately, exact MI computing is hard and may be intractable. To overcome this, our framework employs Variational Information Maximization \cite{barber2003algorithm} to obtain lower bound on true MI. This technique relies on approximation by auxiliary distribution and we represent it by the additional inference network. The obtained lower bound on MI is used as the regularizer to the original VAE objective to force the latent representations to have a strong relationship with observations and prevent the model from ignoring them. 
 
We have conducted our experiments on VAE models trained on MNIST and FasionMNIST with such latent distributions: Gaussian, and joint Gaussian and discrete. We compare qualitatively and quantitatively the models trained using pure ELBO objective and with introduced MI regularizer.

\section{Related Work}
\label{relatedWork}

There is a number of works that propose approaches to improve latent representations learned by VAE. In \cite{bowman2016generating} authors vary the weight of KL divergence component of objective function during training by gradual increasing it from 0 to 1. $\beta$-VAE~\cite{higgins2017beta} employs weighting coefficient that scales KL divergence term. It balances latent channel capacity and independence constraints with reconstruction accuracy to improve the disentanglement of representations \cite{burgess2018understanding}. \citealt{alemi2018fixing} introduced an information theoretic framework to characterize tradeoff between compression and reconstruction accuracy. Authors use bounds on mutual information to derive a rate-distortion curve. On this curve, different points represent a family of models with the same ELBO but different characteristics. Also, the authors state that the proposed framework generalizes aforementioned $\beta$-VAE in the sense that this coefficient controls MI between observations and latent variables. InfoVAE~\cite{zhao2017infovae} employs a modification of the objective to weight the preference between correct inference and fitting data distribution, and specify preference on how much the model should rely on the latent variables. In~\cite{chen2018isolating} authors decompose ELBO to determine the source of disentanglement in $\beta$-VAE. Additionally, this work introduces $\beta$-TCVAE that is able to discover more interpretable representations. The authors of InfoGAN~\cite{chen2016infogan} address problem of entangled latent representations in GAN by maximizing the mutual information between the part of the latent code and produced by generator samples. For that purpose, they also employ Variational Information Maximization \cite{barber2003algorithm}.
\section{Information Maximization for VAE}
\label{Variational Information Maximization}

\subsection{Variational Autoencoder}
VAE \cite{kingma2013auto, rezende2014stochastic} is a scalable model for unsupervised learning directed probabilistic latent variable models. It is is trained by maximizing ELBO
\begin{multline} \label{eq:beta:base}
    \Loss(\theta, \phi) = \E_{q_\phi(\mb{z}|\mb{x})}[\log p_\theta(\mb{x}|\mb{z})] \\
    - D_{\KL}\KLdel{q_\phi(\mb{z}|\mb{x})}{p(\mb{z})}
\end{multline}
where $q_\phi(\mb{z}|\mb{x})$ is an approximate posterior distribution represented by encoder neural network with parameters $\phi$ and $p_\theta(\mb{x}|\mb{z})$ is decoder network parametrized by $\theta$. By passing samples from prior distribution $p(\mb{z})$ to decoder it is possible to generate new data samples.

\subsection{Variational Mutual Information}
Mutual information (MI) between samples of the posterior distribution and observation is formally defined as
\begin{equation}
\label{information-z-x}
    I(\mb{z}; \mb{x}) = H(\mb{z}) - H(\mb{z} | \mb{x}),
\end{equation}
where $H(\cdot)$ denotes the entropy of the corresponding variables.

$I(\mb{z}, \mb{x})$ is intractable, since it requires the intractable posterior $p(\mb{z} | \mb{x})$ to compute. Following the reasoning in \cite{chen2016infogan}, we obtain the lower bound on MI between observations and latent variables:
\begin{equation}
\begin{split}
    I(\mb{z}; \mb{x}) &= H(\mb{z}) - H(\mb{z} | \mb{x}) \\
    &= \E_{\mb{x} \sim  p_\theta(\mb{x}|\mb{z})} \big[ \E_{\mb{z} \sim q_\phi(\mb{z}|\mb{x})} [\log P(\mb{z} | \mb{x})]\big]  + H(\mb{z})\\
    &= \E_{\mb{x} \sim  p_\theta(\mb{x}|\mb{z})} \big[D_{\KL}\KLdel{P(\cdot|x)}{Q(\cdot | x)} \\
    &\qquad + \E_{\mb{z'} \sim q_\phi(\mb{z}|\mb{x})}[\log Q(\mb{z'}|\mb{x})] \big] + H(\mb{z})\\
    &\geq \E_{\mb{x} \sim  p_\theta(\mb{x}|\mb{z})} \big[\E_{\mb{z'} \sim q_\phi(\mb{z}|\mb{x})}[\log Q(\mb{z'}|\mb{x})]\big] + H(\mb{z}),
\end{split}
\end{equation}
where $Q$ is auxiliary distribution. We represent it by a neural network that takes the decoder output as input. We treat the $H(\mb{z})$ as a constant for simplicity. 

The problem with the obtained lower bound is that it requires sampling from the posterior in the inner expectation. This can be overcome by applying this lemma used in InfoGAN~\cite{chen2016infogan}. The proof could be found in Appendix~\ref{sec:lemmaproof}.

\begin{lemma}
\label{thelemma}
For random variables $X, Y$ and function $f(x, y)$ under suitable regularity conditions: 
\begin{multline*}
    \E_{x \sim X, y \sim  Y|x} [f(x, y)] = \\ 
    \E_{x \sim X, y \sim  Y|x, x' \sim  X|y} [f(x', y)].
\end{multline*}
\end{lemma}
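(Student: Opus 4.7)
The plan is to unfold both sides as iterated integrals against the appropriate joint densities and then use marginalization to collapse the extra expectation on the right-hand side. This is essentially a bookkeeping argument: the only nontrivial ingredient is being able to interchange the order of integration, which is exactly what the phrase "suitable regularity conditions" (Fubini/Tonelli) is meant to cover.

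First, I would write the left-hand side as
\begin{equation*}
    \E_{x \sim X,\, y \sim Y|x}[f(x,y)] = \int p(x)\, p(y \mid x)\, f(x,y)\, dx\, dy,
\end{equation*}
and the right-hand side as the triple integral
\begin{equation*}
    \E_{x \sim X,\, y \sim Y|x,\, x' \sim X|y}[f(x',y)] = \int p(x)\, p(y \mid x)\, p(x' \mid y)\, f(x',y)\, dx\, dy\, dx'.
\end{equation*}
Note the integrand on the right factors as $\big[p(x)\, p(y \mid x)\big] \cdot p(x' \mid y)\, f(x',y)$, where the bracketed term depends on $x$ but the remaining factor does not.

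Second, I would integrate out $x$ first on the right-hand side. Since $\int p(x)\, p(y \mid x)\, dx = p(y)$, this collapses the triple integral to
\begin{equation*}
    \int p(y)\, p(x' \mid y)\, f(x',y)\, dy\, dx' = \int p(x', y)\, f(x', y)\, dx'\, dy,
\end{equation*}
using the identity $p(y)\, p(x' \mid y) = p(x', y)$.

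Finally, I would rename the dummy variable $x' \mapsto x$ and observe that the result matches the left-hand side once written as $\int p(x,y)\, f(x,y)\, dx\, dy$. The only real obstacle is justifying the swap of integration order, which is where the regularity hypothesis on $f$ and on the densities enters; for instance, it suffices that $f$ be integrable against the relevant joint measure, so that Fubini's theorem applies. The discrete case is identical with sums replacing integrals.
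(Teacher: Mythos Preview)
Your proof is correct and follows essentially the same idea as the paper's: both hinge on the observation that $f(x',y)$ does not depend on $x$, so the outer $x$ expectation can be collapsed via $\int p(x)\,p(y\mid x)\,dx = p(y)$ (equivalently, the law of total expectation). The only cosmetic differences are that the paper works in expectation notation and proceeds from the left-hand side to the right, whereas you write out densities and marginalize the right-hand side down to the left.
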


With this we could re-define the variational lower bound on MI: 
\begin{multline}
\label{eq:mut_information_var_lower_bound}
    \E_{\mb{x} \sim  p_\theta(\mb{x}|\mb{z})} \big[\E_{\mb{z'} \sim q_\phi(\mb{z}|\mb{x})}[\log Q(\mb{z'}|\mb{x})]\big] + H(\mb{z}) \\
    = \E_{\mb{z} \sim q_\phi(\mb{z}|\mb{x}), \mb{x} \sim  p_\theta(\mb{x}|\mb{z})} [\log Q(\mb{z}|\mb{x})] + H(\mb{z}) \\
    \leq I(\mb{z};\mb{x})
\end{multline}

By using this lower bound for a fixed VAE it is possible now to maximize mutual information between observations and latent variables by maximizing the lower bound.
\begin{equation} \label{eq:mut_information_bound}
    \max_{Q} \ \E_{\mb{z} \sim q_\phi(\mb{z}|\mb{x}), \mb{x} \sim  p_\theta(\mb{x}|\mb{z})} [\log Q(\mb{z}|\mb{x})] + H(\mb{z}) \leq I(\mb{z};\mb{x}),
\end{equation}

Finally, we define mutual information maximization regularizer $\MI$ for variational autoencoder as \begin{multline} \label{eq:mut_information_reg}
    \MI(\theta, \phi, Q) = \\ 
    \E_{\mb{z} \sim q_\phi(\mb{z}|\mb{x}), \mb{x} \sim  p_\theta(\mb{x}|\mb{z})} [\log Q(\mb{z}|\mb{x})] + H(\mb{z}) 
\end{multline}
that can be estimated using Monte Carlo sampling.

\subsection{Resulting Framework}
In our Variational Mutual Information Maximization Framework we combine ELBO with the proposed $\MI$ regularizer to form the objective
\begin{equation} \label{eq:newmaxmax}
    \max_{\theta, \phi} \max_{Q} \  \Loss(\theta, \phi) + \lambda \MI(\theta, \phi, Q)
\end{equation}
where $\lambda$ is a scaling coefficient that controls the impact of $\MI$ on VAE training. For each training batch, we maximize the objective with respect to the auxiliary distribution $Q$ to make lower bound on mutual information tighter first. Then, we maximize it with respect to parameters of the VAE ($\theta$ and $\phi$) to train it using the $\MI$ regularizer to maximize MI between latent codes $\mb{z}$ and observations $\mb{x}$. Please, see Fig.1 for visualization of the proposed model.
\begin{figure}
\centering
\begin{center}
\centerline{\includegraphics[width=1\linewidth]{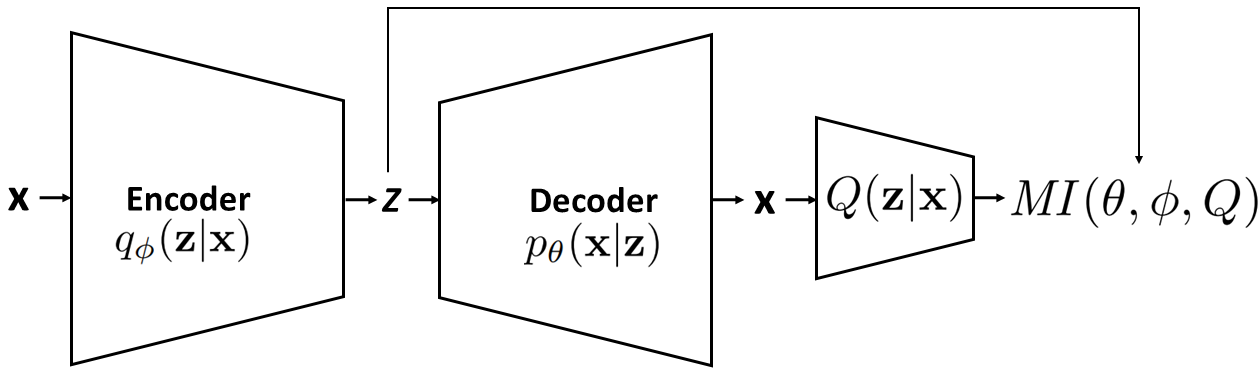}}
\caption{Structure of proposed model}
\label{fig:VIM-VAE}
\end{center}
\vspace{-4mm}
\end{figure}
\section{Experimental Setup}
\label{Experimental Setup}
\subsection{VAE with Gaussian latent}
In this setting, we employ VAE with 32-dimensional Gaussian latent variable with prior $p(\mb{z})=\mathcal{N}(0,1)$. We train and compare two identically initialized networks with same hyperparameters on MNIST. One is trained using only ELBO objective (1) and the other with mutual information maximization (4). For the latter case, we select only two components latent code vector forming sub-vector ($\mb{z}_{1}$,$\mb{z}_{2}$)=$\mb{\hat{z}}$ for mutual information maximization and define the regularizer as
\begin{multline} \label{eq:mut_information_contin}
    \MI(\theta, \phi, Q) = \\ 
    \E_{\mb{\hat{z}} \sim q_\phi(\mb{\hat{z}}|\mb{x}), \mb{x} \sim  p_\theta(\mb{x}|\mb{{z}})} [\log Q(\mb{\hat{z}}|\mb{x})] + H(\mb{\hat{z}})
\end{multline}
We select only two components of the latent code since it is straightforward to illustrate their impact on observations in 2D visualizations by just manipulating 
their individual values without any latent space interpolations.
\subsection{VAE with joint Gaussian and discrete latent}
In this section, we define setting for VAE model with joint latent distribution of continuous and discrete (categorical) variables. We define $\mb{z}$ as 16-dimensional Gaussian part of latent code with prior $p(\mb{z})=\mathcal{N}(0,1)$ and  $\mb{c}$ as discrete part with 10 categories and uniform prior. In this setting, the encoder network represents joint posterior approximation $q_\phi(\mb{z},\mb{c}|\mb{x})$, decoder network is $p_\theta(\mb{x}|\mb{z}, \mb{c})$ and prior is $p(\mb{z}, \mb{c})$. Then, the resulting ELBO objective has the form of
\begin{multline} \label{eq:elbo:with_discrete}
    \Loss(\theta, \phi) = \E_{q_\phi(\mb{z}, \mb{c} | \mb{x})}[\log p_\theta(\mb{x}|\mb{z}, \mb{c})] \\
    - D_{\KL}\KLdel{q_\phi(\mb{z}, \mb{c}|\mb{x})}{p(\mb{z}, \mb{c})}
\end{multline}
By the assumption that $\mb{c}$ and $\mb{z}$ are mutually and conditionally independent, we can decompose the $D_{\KL}$ term as
\begin{multline} \label{eq:KL_with_discrete}
    D_{\KL}\KLdel{q_\phi(\mb{z}, \mb{c}|\mb{x})}{p(\mb{z}, \mb{c})} = \\
    D_{\KL}\KLdel{q_\phi(\mb{z}|\mb{x})}{p(\mb{z})}
    + D_{\KL}\KLdel{q_\phi(\mb{c}|\mb{x})}{p(\mb{c})}
\end{multline}
For the categorical latent variable, we employ a continuous differentiable relaxation technique proposed by \cite{jang2016categorical,maddison2016concrete}. The categorical variable $\mb{c}$ in our setting has 10 categories and let $\pi_1$...$\pi_{10}$ be the respective probabilities that define this distribution. We represent categorical samples as 10-dimensional one-hot vectors. Also, let $g_{1}$...$g_{10}$ be i.i.d. Gumbel$(0,1)$ samples. Then, using softmax function we can draw samples from this categorical distribution having for 10-dimensional sample vector each component defined as 
\begin{equation} \label{eq:gumbel-softmax-sample}
    y_i = 
    \frac{\exp{((\log(\pi_i) + g_i) / \tau)}}
    {\sum_{j = 1}^{10} \exp{((\log(\pi_j) + g_j)/ \tau)} }
\end{equation}
for $i = 1, \dots, 10$. Where $\tau$ is temperature hyperparameter.
For this VAE form trained with mutual information maximization, we maximize MI with respect to observation $\mb{x}$ and categorical latent variable $\mb{c}$. In that case, we define our mutual information maximization regularizer term as 
\begin{multline} \label{eq:mut_information_discrete}
    \MI(\theta, \phi, Q) = \\ 
    \E_{\mb{c} \sim q_\phi(\mb{c}|\mb{x}), \mb{x} \sim  p_\theta(\mb{x}|\mb{z},\mb{c})} [\log Q(\mb{c}|\mb{x})] + H(\mb{c})
\end{multline}
In this setting, we train and compare two identically initialized VAE models with same hyperparameters on MNIST and FasionMNIST. One using only objective~(eq.~\ref{eq:elbo:with_discrete}) and other combined with $\MI$ regularizer following~(eq.~\ref{eq:newmaxmax}).

\section{Experimental results}
\label{Experimental results}
\subsection{VAE with Gaussian latent}
As we mentioned before, we trained two identically initialized VAE models: one using ELBO objective and one with added $\MI$ regularizer for sub-part of latent code ($\mb{z}_{1}$,$\mb{z}_{2}$)=$\mb{\hat{z}}$. In Fig.2 we provide a qualitative comparison of the impact of these two components of the code on produced samples. For each latent code, we vary $\mb{z}_{1}$ and $\mb{z}_{2}$ from -3 to 3 with fixed remaining part and decode it. As you can see in Fig.2 (a), $\mb{\hat{z}}$ in vanilla VAE does not have much impact on the output samples. In contrast, $\mb{\hat{z}}$ with maximized mutual information in VAE by $\MI$ regularizer have a significant impact on output samples. For this model, we can see that outputs morph between three digit types as the code changes. Moreover, you can see that the particular combinations of these two components of 32-dimensional code morph the original sample into digit 1 and 6 regardless of the original sample type. All of this means that the provided regularizer indeed forces this part of learned latent codes $\mb{\hat{z}}$ to have high MI and strong relationship with observations.
\begin{figure}[ht]
\centering 
\subfigure[VAE]{\includegraphics[width=80mm]{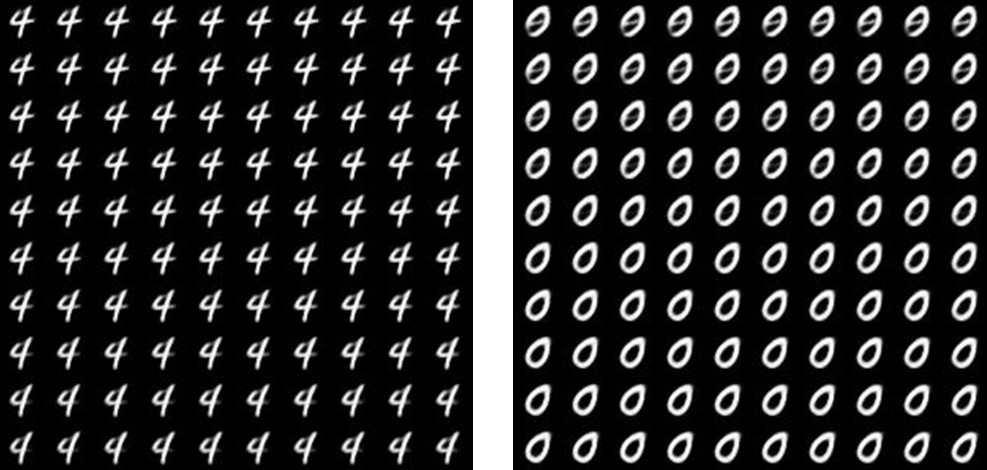}}
\subfigure[VAE with MI maximization]{\includegraphics[width=80mm]{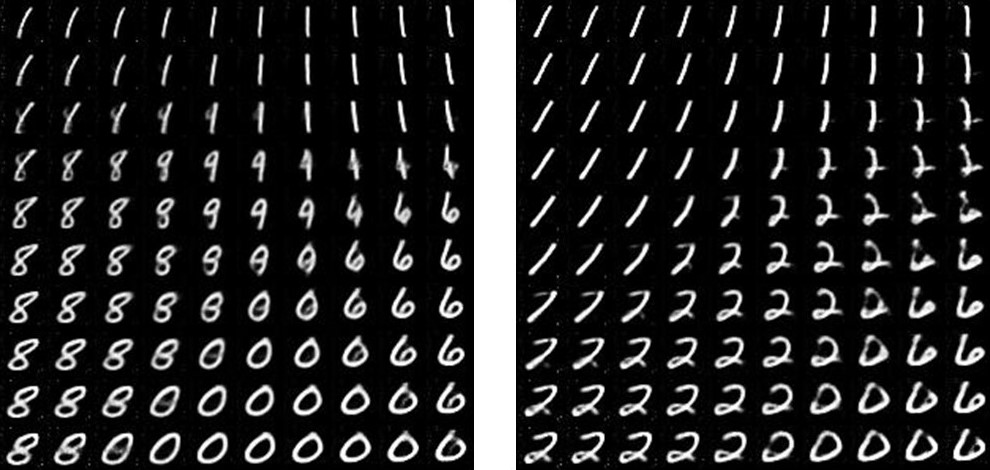}}
\caption{Latent code manipulations of samples from VAE with Gaussian latent variable: (a) trained using only ELBO objective, (b) trained with MI maximization. We vary each component of ($\mb{z}_{1}$,$\mb{z}_{2}$)=$\mb{\hat{z}}$ from -3 to 3 having the remaining part of the code fixed. Rows and columns represent values of $\mb{z}_{1}$ and $\mb{z}_{2}$ respectively.}
\label{fig:mnist_traversals}
\end{figure}
\begin{figure}[h]
\centering
\includegraphics[width=0.8\linewidth]{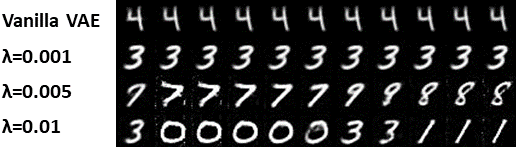}
\caption{Comparison of $\mb{z}_{1}$ impact change on the output samples with different values of $\MI$ weight coefficient $\lambda$. The first column represents the original sample and each row represents varied $\mb{z}_{1}$ from -3 to 3}
\label{fig:gvae}
\end{figure}

Also, we compare resulting models with different values of scaling coefficient of $\MI$ regularizer $\lambda$ in Fig.3. As you can see, with a low value of lambda, the impact of $\mb{z}_{1}$ is the same as in vanilla VAE. However, with the increase of $\lambda$, the impact of $\mb{z}_{1}$\ on observations also increases.
\subsection{VAE with joint Gaussian and discrete latent}
In this section, we compare two identically initialized VAE model with joint Gaussian and discrete latent variables but trained in a different manner. One model is trained using only the ELBO of the form represented by eq.\ref{eq:elbo:with_discrete}. The second one is trained with added $\MI$ regularizer (eq.\ref{eq:mut_information_discrete}) for MI maximization between data samples and categorical part of the learned latent code.
\begin{figure}[h]
\centering 
\subfigure[VAE]{\includegraphics[width=45mm]{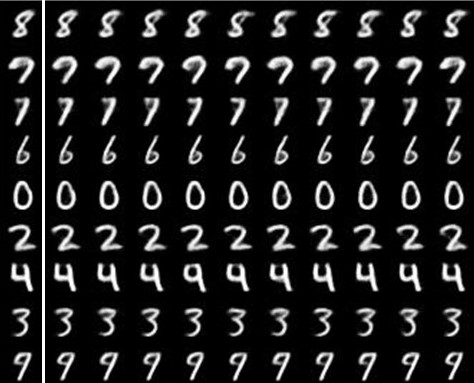}}
\subfigure[VAE with MI maximization]{\includegraphics[width=50mm]{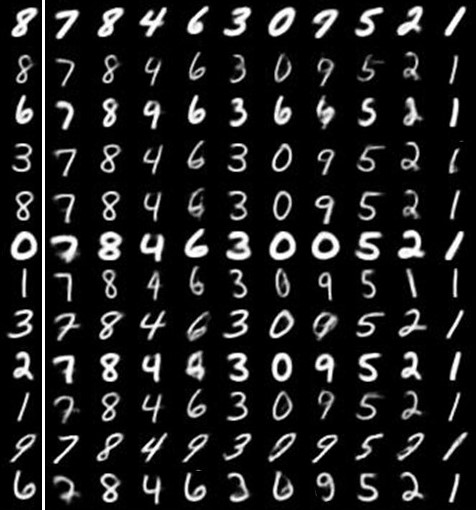}}
\caption{Latent code manipulations of samples from VAE with joint Gaussian and discrete latent variable: (a) trained with only ELBO objective, (b) with $\MI$ regularizer. The first (separated) column represents the original samples. The following rows represent this samples with changed categorical part of latent code between 10 categories with fixed Gaussian component. Also, each (not separated) column can be seen as generated samples that are conditioned on a particular category with varied Gaussian component}
\end{figure}
\begin{figure}[h]
\centering 
\subfigure[VAE]{\includegraphics[width=45mm]{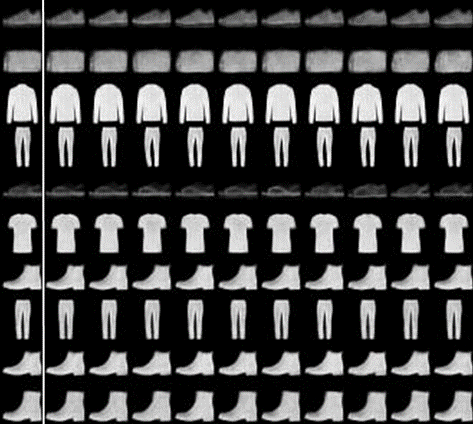}}
\subfigure[VAE with MI maximization]{\includegraphics[width=50mm]{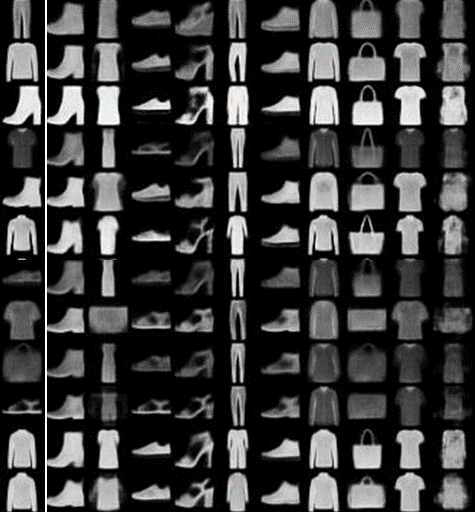}}
\caption{Latent categorical code manipulations of samples from VAE trained on FashionMNIST with joint Gaussian and discrete latent variable: (a) trained with only ELBO objective, (b) with $\MI$ regularizer. Rows represent samples with changed part of categorical latent code.}
\label{fig:fashion_traversals}
\end{figure}
As you can see on Fig.4 (a), in VAE that was trained using pure ELBO, the categorical part of the latent code does not have an influence on produced samples. Thus, even when our strong prior assumption that the data have 10 categories was incorporated into the latent variable, the trained model ignores it and does not assign any interpretable representation to a categorical variable. 

In contrast, for VAE that was trained with MI maximization between observations and categorical code, produced samples show a completely different response to the latent categorical variable change. As the categorical part of latent code varies between 10 categories, the samples change in a class-wise manner. For most of the samples, the particular value of the categorical variable changes them to the same digit type while preserving other features of original sample like thickness and angle. We interpret it as that the model generalizes and disentangles digit type from style representations by categorical and Gaussian part of the latent code respectively. 

For the sake of quantitative comparison, we applied the encoder categorical component as a classifier to the MNIST classification task. VAE trained with ELBO objective has 21\% classification accuracy on MNIST while VAE with $\MI$ regularizer achieved 74\% accuracy.

In Fig.\ref{fig:histograms} we compare histograms of categorical latent variable probabilities that were collected during the training of both models. As you can see, for the case of vanilla VAE the probabilities are mostly concentrated around 0.1 and do not reach the area around 1. In the case of VAE with maximized MI for the categorical variable, you can see that the probability values are concentrated around 0.5 and 0.9. It is natural behavior since when the category probabilities are uniform regardless of the input, it is pointless to do any further inference using this variable for the decoder network and thus the resulting model ignores it. Therefore, we interpret this observation as that VAE model with maximized MI between data samples and categorical part of the latent variable indeed makes more use of this variable.
\begin{figure}[h]
\centering 
\subfigure[VAE]{\includegraphics[width=55mm]{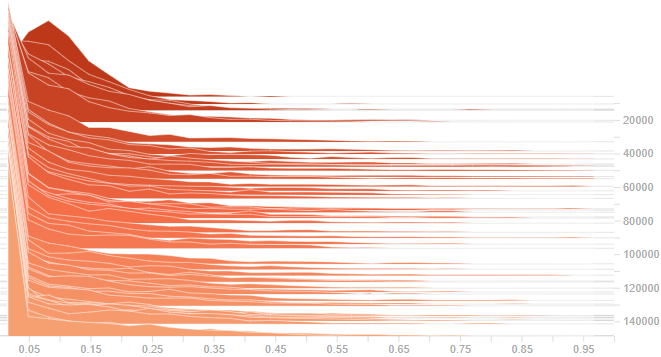}}
\subfigure[VAE with MI maximization]{\includegraphics[width=60mm]{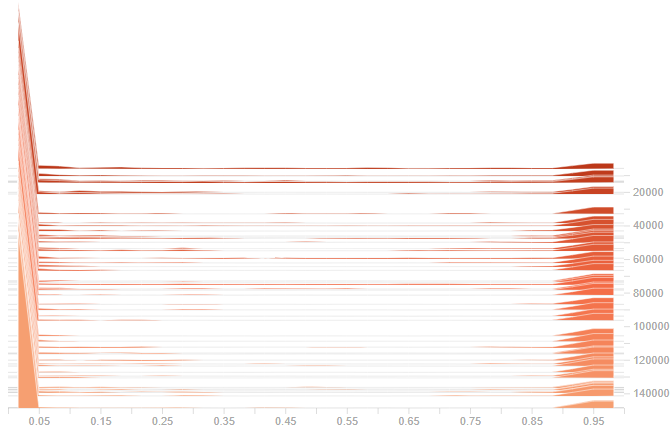}}
\caption{Histograms collected from categorical latent variable probabilities of VAE with joint Gaussian and discrete latent variable.}
\label{fig:histograms}
\end{figure}

As we mentioned before, our Variational Mutual Information Maximization Framework can be used for MI evaluation between latent variables and observations for a fixed VAE by obtaining lover bound on MI (eg.\ref{eq:mut_information_bound}). In Fig.6, we provide plots of lover bound MI estimate between observations and categorical part of latent codes during the training process of two models. One was trained using only the ELBO without MI loss minimization and the other with MI maximization. As you can see, the VAE model with MI maximization has higher MI lover bound estimate during training than one that was trained without $\MI$ regularizer.

\begin{figure}[h]
\centering 
\includegraphics[width=1\linewidth]{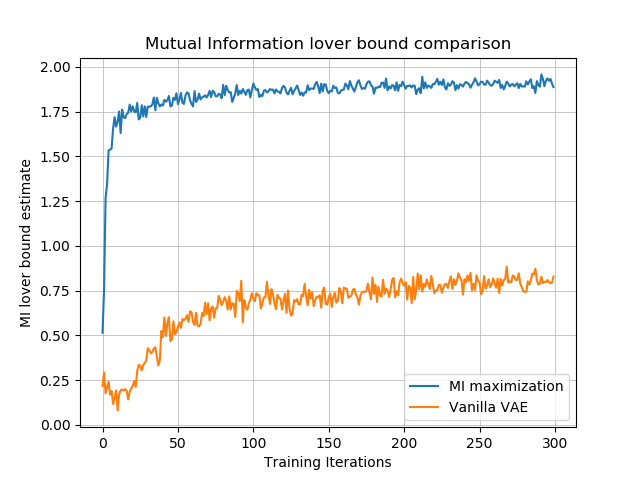}
\caption{Categorical MI lover bound estimate of VAE models (joint Gaussian and discrete latent) during the training with $\MI$ regularizer and without.}
\end{figure}

Also, we provide categorical distribution KL divergence estimate plots in Figure \ref{fig:kl_comparison} for models with and without MI maximization. From the theoretical perspective on KL divergence estimate, the value of this estimate is an upper bound on MI between latent variables and observations. Our experimental results are consistent with it. Moreover, the upper bound for the model with maximized MI reaches the maximum possible value of MI for the categorical variable that is modeled as a discrete distribution with 10 values and uniform prior. For this case, the maximum possible value of MI is the entropy of 10 category uniform distribution that is approximately 2.3. The KL divergence estimate is pretty close to this value for the model with MI maximization for the categorical latent variable as you can see on the plot.

\begin{figure}[H]
\centering
\includegraphics[width=1\linewidth]{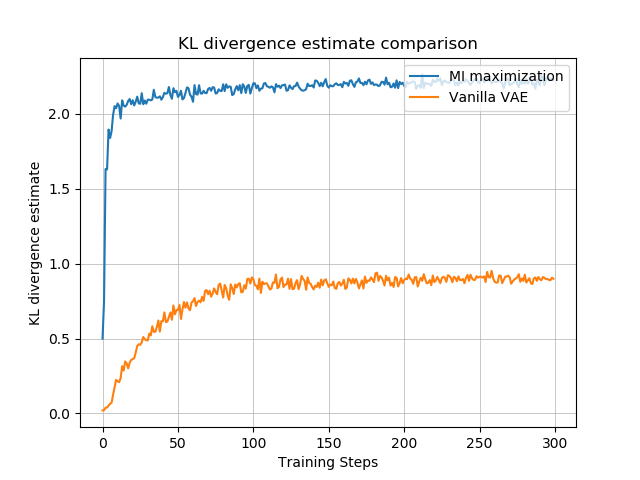}
\caption{KL divergence estimate for categorical latent variable during the process of training VAE models without MI maximization and with $\MI$ regularizer.}
\label{fig:kl_comparison}
\end{figure}
\section{Intuition}
The $Q$ network that represents auxiliary distribution can be seen as a classifier network or a feature extractor network. When we maximize $\MI$ (eq.~\ref{eq:mut_information_reg}) regularizer with respect to parameters of this network, the maximization of the first term is the same as minimization of negative log-likelihood as we do when train classification models. Thus, we can interpret the whole training procedure as training the Q network to correctly classify $\mb{x}$ in terms of its original generative factors $\mb{z}$ or to better extract them from the sample. Then, when maximizing $\MI$ w.r.t. VAE, we are forcing the model to make these features more extractable from the produced sample and classifiable.
\section{Discussion and Conclusion}
\label{Discussion}
In our work, we have presented a method for evaluation, control, and maximization of mutual information between latent variables and observations in VAE. In comparison to other related works, it provides an explicit and tractable objective. Using the proposed technique, it is possible to compare MI for different fixed VAE networks. Moreover, our experimental results illustrate that the Variational Mutual Information Maximization Framework can indeed strengthen the relationship between latent variables and observations. Also, it improves learned representations (section 5.2). However, it comes with an increase in computational and memory cost, since mutual information lover bound estimate requires auxiliary distribution $Q$ that we represent by an additional encoder neural network and train. 

We believe, that our work (with further analysis and improvements) have the potential to fill the gaps between previous theoretical insights for VAE from Information Theory perspective and empirical results. KL divergence term in VAE, by analysis from \cite{makhzani2017pixelgan,kim2018disentangling}, is an upper bound on true mutual information between latent codes and observations. Our empirical results are consistent with this insight: KL divergence estimate for categorical latent (which is 0.89) at the end of training in VAE without MI maximization is less than for VAE with MI maximization (which is 2.22). Please, see Fig.5 for KL divergence estimates values collected during training for both models.

On top of that, in \cite{alemi2018fixing} authors state that in $\beta$-VAE, by varying $\beta$ coefficient that scales KL divergence it is possible to vary $I(\mb{x};\mb{z})$. In \cite{burgess2018understanding} along with KL scaling coefficient, authors propose to use the constant that KL divergence estimate should match and thus explicitly control mutual information upper bound. 

In \cite{dupont2018learning} authors tackle the problem of training VAE model with joint Gaussian and categorical latent that is similar to those that we define in section 4.2. They reported that the model even trained in $\beta$-VAE setting also ignores the categorical variable. Thus, they followed weighting and constraining technique by \cite{burgess2018understanding} for each KL divergence term to force VAE to not ignore this latent part. In our work, we tackle the same problem from different perspectives. They increase and control the upper bound on MI. In contrast, we maximize lower bound on MI. Which approach is more suitable for a particular problem is an open question and possible future research direction. However, our explicit formulation of MI maximization, that lead to similar results, bridges theoretical insights that KL divergence controls MI between observations and latent codes.

\bibliography{main}

\begin{thebibliography}{20}
\providecommand{\natexlab}[1]{#1}
\providecommand{\url}[1]{\texttt{#1}}
\expandafter\ifx\csname urlstyle\endcsname\relax
  \providecommand{\doi}[1]{doi: #1}\else
  \providecommand{\doi}{doi: \begingroup \urlstyle{rm}\Url}\fi

\bibitem[Alemi et~al.(2018)Alemi, Poole, Fischer, Dillon, Saurous, and
  Murphy]{alemi2018fixing}
Alemi, A., Poole, B., Fischer, I., Dillon, J., Saurous, R.~A., and Murphy, K.
\newblock Fixing a broken elbo.
\newblock In \emph{International Conference on Machine Learning}, pp.\
  159--168, 2018.

\bibitem[Barber \& Agakov(2003)Barber and Agakov]{barber2003algorithm}
Barber, D. and Agakov, F.
\newblock The im algorithm: A variational approach to information maximization.
\newblock In \emph{NIPS}, pp.\  201--208. MIT Press, 2003.

\bibitem[Bowman et~al.(2016)Bowman, Vilnis, Vinyals, Dai, Jozefowicz, and
  Bengio]{bowman2016generating}
Bowman, S.~R., Vilnis, L., Vinyals, O., Dai, A.~M., Jozefowicz, R., and Bengio,
  S.
\newblock Generating sentences from a continuous space.
\newblock \emph{CoNLL 2016}, pp.\ ~10, 2016.

\bibitem[Burgess et~al.(2018)Burgess, Higgins, Pal, Matthey, Watters,
  Desjardins, and Lerchner]{burgess2018understanding}
Burgess, C.~P., Higgins, I., Pal, A., Matthey, L., Watters, N., Desjardins, G.,
  and Lerchner, A.
\newblock Understanding disentangling beta-vae.
\newblock \emph{arXiv preprint arXiv:1804.03599}, 2018.

\bibitem[Chen et~al.(2018)Chen, Li, Grosse, and Duvenaud]{chen2018isolating}
Chen, T.~Q., Li, X., Grosse, R.~B., and Duvenaud, D.~K.
\newblock Isolating sources of disentanglement in variational autoencoders.
\newblock In \emph{Advances in Neural Information Processing Systems}, pp.\
  2610--2620, 2018.

\bibitem[Chen et~al.(2016{\natexlab{a}})Chen, Duan, Houthooft, Schulman,
  Sutskever, and Abbeel]{chen2016infogan}
Chen, X., Duan, Y., Houthooft, R., Schulman, J., Sutskever, I., and Abbeel, P.
\newblock Infogan: Interpretable representation learning by information
  maximizing generative adversarial nets.
\newblock In \emph{Advances in neural information processing systems}, pp.\
  2172--2180, 2016{\natexlab{a}}.

\bibitem[Chen et~al.(2016{\natexlab{b}})Chen, Kingma, Salimans, Duan, Dhariwal,
  Schulman, Sutskever, and Abbeel]{chen2016variational}
Chen, X., Kingma, D.~P., Salimans, T., Duan, Y., Dhariwal, P., Schulman, J.,
  Sutskever, I., and Abbeel, P.
\newblock Variational lossy autoencoder.
\newblock \emph{arXiv preprint arXiv:1611.02731}, 2016{\natexlab{b}}.

\bibitem[Dupont(2018)]{dupont2018learning}
Dupont, E.
\newblock Learning disentangled joint continuous and discrete representations.
\newblock In \emph{Advances in Neural Information Processing Systems}, pp.\
  710--720, 2018.

\bibitem[Ford \& Oliver(2018)Ford and Oliver]{Ford2018lemmaproof}
Ford, N. and Oliver, A.
\newblock \emph{Correcting a proof in the InfoGAN paper}, 2018.
\newblock URL \url{http://aoliver.org/correct-proof-of-infogan-lemma}.

\bibitem[Goodfellow et~al.(2014)Goodfellow, Pouget-Abadie, Mirza, Xu,
  Warde-Farley, Ozair, Courville, and Bengio]{goodfellow2014generative}
Goodfellow, I., Pouget-Abadie, J., Mirza, M., Xu, B., Warde-Farley, D., Ozair,
  S., Courville, A., and Bengio, Y.
\newblock Generative adversarial nets.
\newblock In \emph{Advances in neural information processing systems}, pp.\
  2672--2680, 2014.

\bibitem[Higgins et~al.(2017)Higgins, Matthey, Pal, Burgess, Glorot, Botvinick,
  Mohamed, and Lerchner]{higgins2017beta}
Higgins, I., Matthey, L., Pal, A., Burgess, C., Glorot, X., Botvinick, M.,
  Mohamed, S., and Lerchner, A.
\newblock beta-vae: Learning basic visual concepts with a constrained
  variational framework.
\newblock In \emph{International Conference on Learning Representations},
  volume~3, 2017.

\bibitem[Husz{\'a}r(2017)]{huszar2017maximum}
Husz{\'a}r, F.
\newblock Is maximum likelihood useful for representation learning, 2017.

\bibitem[Jang et~al.(2016)Jang, Gu, and Poole]{jang2016categorical}
Jang, E., Gu, S., and Poole, B.
\newblock Categorical reparameterization with gumbel-softmax.
\newblock \emph{arXiv preprint arXiv:1611.01144}, 2016.

\bibitem[Kim \& Mnih(2018)Kim and Mnih]{kim2018disentangling}
Kim, H. and Mnih, A.
\newblock Disentangling by factorising.
\newblock In \emph{International Conference on Machine Learning}, pp.\
  2654--2663, 2018.

\bibitem[Kingma \& Welling(2013)Kingma and Welling]{kingma2013auto}
Kingma, D.~P. and Welling, M.
\newblock Auto-encoding variational bayes.
\newblock \emph{arXiv preprint arXiv:1312.6114}, 2013.

\bibitem[Lake et~al.(2016)Lake, Ullman, Tenenbaum, and
  Gershman]{lake2016building}
Lake, B.~M., Ullman, T.~D., Tenenbaum, J.~B., and Gershman, S.~J.
\newblock Building machines that learn and think like people.
\newblock \emph{arXiv preprint arXiv:1604.00289}, 2016.

\bibitem[Maddison et~al.(2016)Maddison, Mnih, and Teh]{maddison2016concrete}
Maddison, C.~J., Mnih, A., and Teh, Y.~W.
\newblock The concrete distribution: A continuous relaxation of discrete random
  variables.
\newblock \emph{arXiv preprint arXiv:1611.00712}, 2016.

\bibitem[Makhzani \& Frey(2017)Makhzani and Frey]{makhzani2017pixelgan}
Makhzani, A. and Frey, B.~J.
\newblock Pixelgan autoencoders.
\newblock In \emph{Advances in Neural Information Processing Systems}, pp.\
  1975--1985, 2017.

\bibitem[Rezende et~al.(2014)Rezende, Mohamed, and
  Wierstra]{rezende2014stochastic}
Rezende, D.~J., Mohamed, S., and Wierstra, D.
\newblock Stochastic backpropagation and approximate inference in deep
  generative models.
\newblock In \emph{International Conference on Machine Learning}, pp.\
  1278--1286, 2014.

\bibitem[Zhao et~al.(2017)Zhao, Song, and Ermon]{zhao2017infovae}
Zhao, S., Song, J., and Ermon, S.
\newblock Infovae: Information maximizing variational autoencoders.
\newblock \emph{arXiv preprint arXiv:1706.02262}, 2017.

\end{thebibliography}
\bibliographystyle{icml2019}

\onecolumn

\appendix
\section{Proof of the Lemma~\ref{thelemma}}
\label{sec:lemmaproof}

\begin{lemma}
\label{thelemma_appendix}
For random variables $X, Y$ and function $f(x, y)$ under suitable regularity conditions: 
\begin{equation*}
    \E_{x \sim X, y \sim  Y|x} [f(x, y)] = \\ 
    \E_{x \sim X, y \sim  Y|x, x' \sim  X|y} [f(x', y)].
\end{equation*}
\end{lemma}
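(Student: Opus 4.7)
The plan is to unpack both sides as integrals against the relevant joint densities and show they coincide, using only the definitions of marginal and conditional distributions (so that the ``suitable regularity conditions'' amount to Fubini-style interchange of order of integration). The underlying intuition is that once we have drawn a pair $(x, y)$ from the joint distribution, redrawing $x' \sim X \mid y$ leaves the joint distribution of $(x', y)$ unchanged, because the conditional distribution of the first coordinate given the second is exactly $X \mid y$ in both cases.

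Concretely, I would first rewrite the right-hand side as
\[
    \int\!\!\int\!\!\int p(x)\, p(y \mid x)\, p(x' \mid y)\, f(x', y)\, dx\, dy\, dx'.
\]
Then, since $f(x', y)$ does not depend on $x$, I would pull the inner integral over $x$ inside and apply the identity
\[
    \int p(x)\, p(y \mid x)\, dx = p(y),
\]
which collapses the expression to $\int\!\!\int p(y)\, p(x' \mid y)\, f(x', y)\, dx'\, dy$. Rewriting $p(y)\, p(x' \mid y) = p(x', y) = p(x')\, p(y \mid x')$ then gives
\[
    \int\!\!\int p(x')\, p(y \mid x')\, f(x', y)\, dx'\, dy,
\]
which is precisely the left-hand side after relabeling $x' \to x$.

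The main obstacle is not conceptual but one of bookkeeping: I have to justify the interchange of the $x$-integral with the other integrals, which is where the ``suitable regularity conditions'' enter (for instance, absolute integrability of $|f(x',y)|$ against the relevant product measure, or boundedness of $f$). Once Fubini is invoked, the remainder is a mechanical rearrangement of factors in the density. I would state the regularity assumption explicitly as integrability of $f$ against $p(x)\, p(y\mid x)\, p(x'\mid y)$ so the proof goes through in full generality, and remark that in the application to equation~(\ref{eq:mut_information_var_lower_bound}) the function $f = \log Q$ is treated as integrable under the usual assumptions placed on variational families.
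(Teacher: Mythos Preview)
Your proposal is correct and is essentially the same argument as the paper's: both rely on marginalizing out the redundant copy of $x$ via $\int p(x)\,p(y\mid x)\,dx = p(y)$ (the law of total expectation) and then rewriting $p(y)\,p(x'\mid y) = p(x')\,p(y\mid x')$ via the definition of conditionals. The only cosmetic differences are that the paper works in expectation notation and proceeds from the left-hand side to the right, whereas you work in density notation from the right-hand side to the left and are more explicit about the Fubini-type regularity needed.
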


\begin{proof}
This proof was originally introduced in~\cite{Ford2018lemmaproof}.

Make expectations explicit: 
\begin{equation*}
        \E_{x \sim X, y \sim Y|x}[f(x, y)] = \
        E_{x \sim P(X)}\big[\E_{y \sim P(Y|X=x)}[f(x, y)]\big]
\end{equation*}

By definition of $P(Y|X=x)$ and $P(X|Y=y)$:
\begin{equation*}
        \E_{x \sim P(X)}\big[\E_{y \sim P(Y|X=x)}[f(x, y)]\big] = 
        \E_{x,y \sim P(X,Y)}[f(x, y)] = \E_{y \sim P(Y)}\big[\E_{x \sim P(X|Y=y)}[f(x, y)]\big]
\end{equation*}

Rename $x$ to $x'$:
\begin{equation*}
        \E_{y \sim P(Y)}\big[\E_{x \sim P(X|Y=y)}[f(x, y)]\big] = 
        \E_{y \sim P(Y)}\big[\E_{x' \sim P(X|Y=y)}[f(x', y)]\big]
\end{equation*}

By the law of total expectation:
\begin{equation*}
        \E_{y \sim P(Y)}\big[\E_{x' \sim P(X|Y=y)}[f(x', y)]\big] = 
        \E_{x \sim P(X)}\Big[\E_{y \sim P(Y|X=x)}\big[\E_{x' \sim P(X|Y=y)}[f(x', y)]\big]\Big]
\end{equation*}

Make expectations implicit:
\begin{equation*}
        \E_{x \sim P(X)}\Big[\E_{y \sim P(Y|X=x)}\big[\E_{x' \sim P(X|Y=y)}[f(x', y)]\big]\Big] = 
        \E_{x \sim X,y \sim Y|x,x' \sim X|y}[f(x', y)] 
\end{equation*}
\end{proof}

\section{Histograms of encoded digits into particular one-hot categorical vector}
\label{sec:collected_histograms}
We have counted numbers of particular digits from MNIST dataset encoded into particular one-hot vectors (categorical variable) for VAE models trained with and without MI maximization. We represent this results in figures \ref{fig:hists_mi} and \ref{fig:hists_nomi}. As you can see, the digit images from particular classes align pretty well with particular one-hot vectors in case of VAE with MI maximization. In contrast, for VAE without MI maximization, the distribution of particular type digit images are uniform across all one-hot vectors.

\begin{figure}[h]
\centering 
\subfigure[One-hot 1]{\includegraphics[width=55mm]{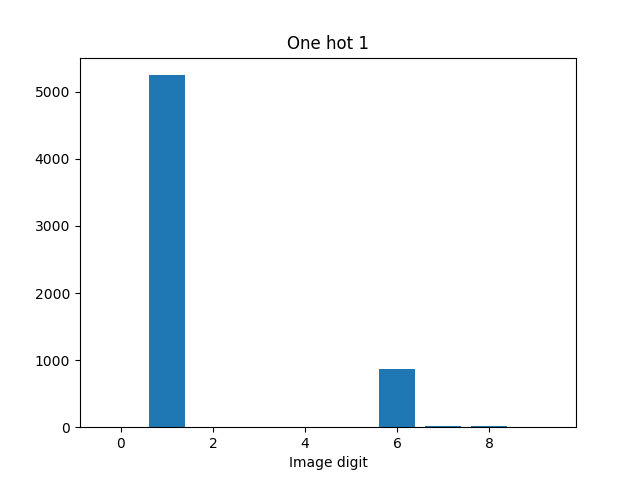}}
\subfigure[One-hot 2]{\includegraphics[width=55mm]{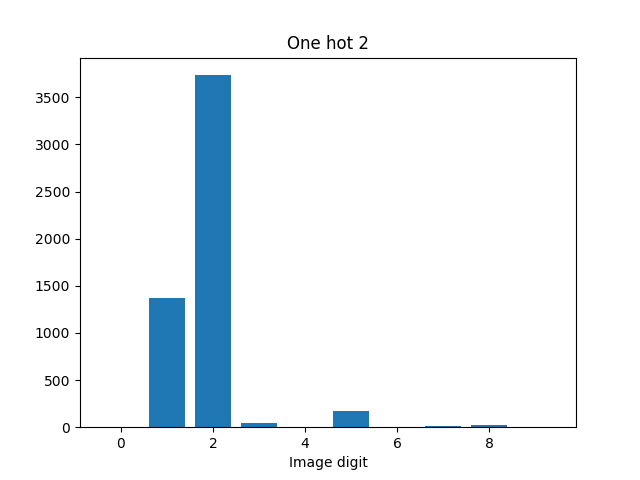}}
\subfigure[One-hot 3]{\includegraphics[width=55mm]{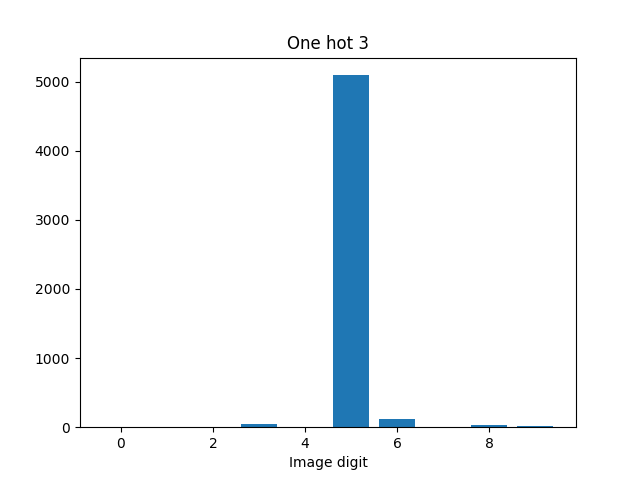}}
\subfigure[One-hot 4]{\includegraphics[width=55mm]{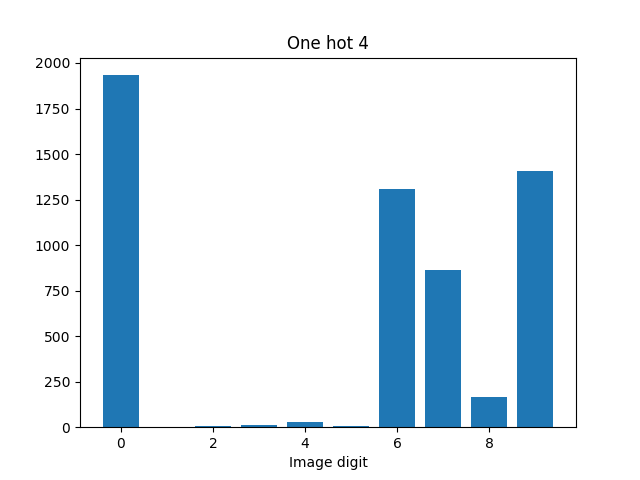}}
\subfigure[One-hot 5]{\includegraphics[width=55mm]{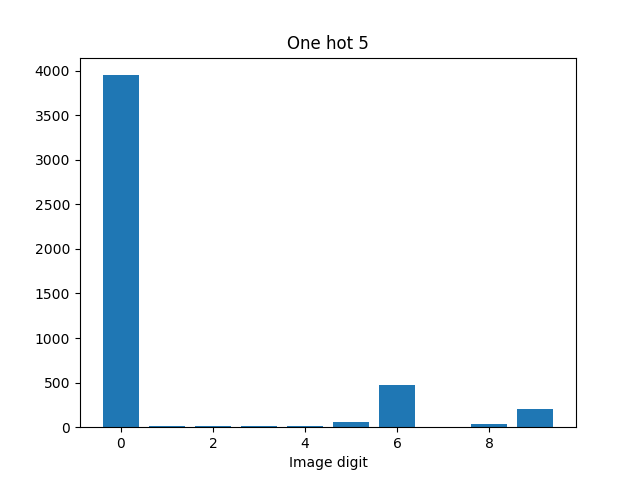}}
\subfigure[One-hot 6]{\includegraphics[width=55mm]{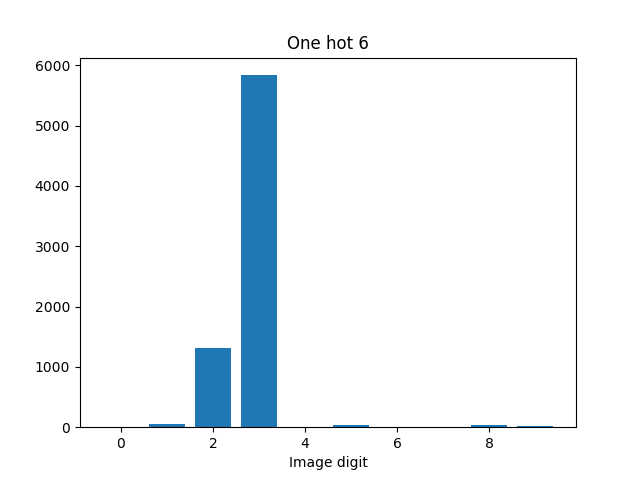}}
\subfigure[One-hot 7]{\includegraphics[width=55mm]{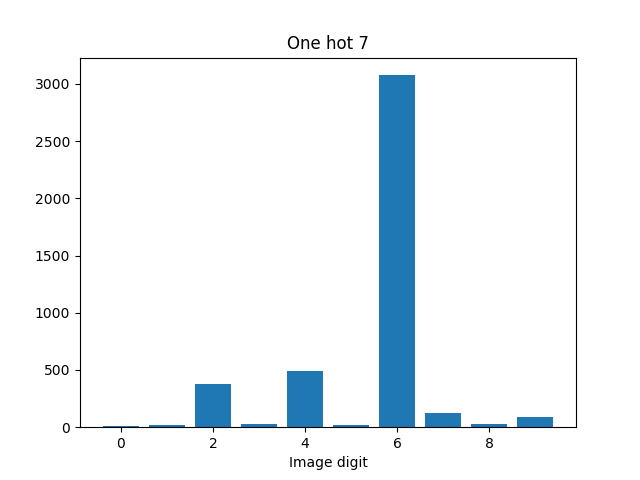}}
\subfigure[One-hot 8]{\includegraphics[width=55mm]{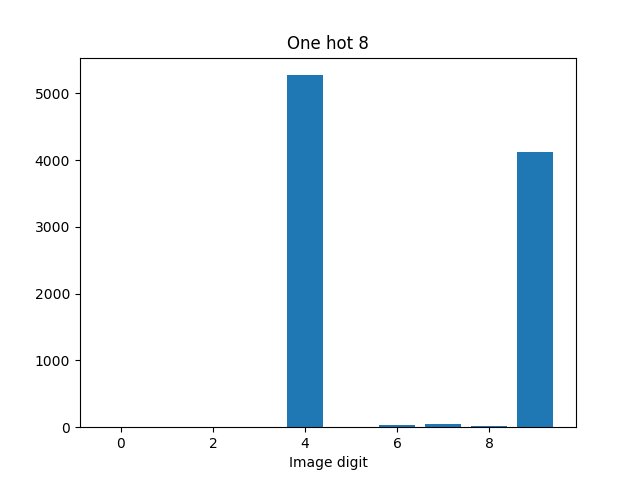}}
\subfigure[One-hot 9]{\includegraphics[width=55mm]{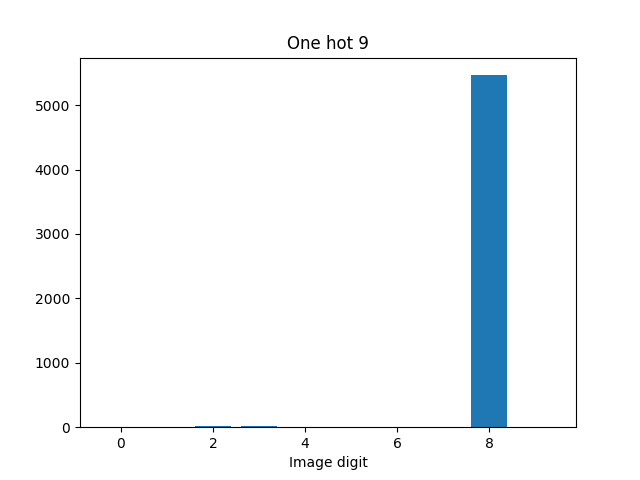}}
\subfigure[One-hot 10]{\includegraphics[width=55mm]{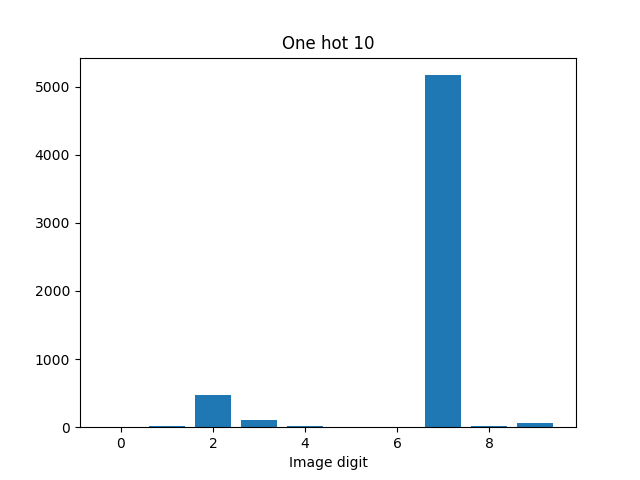}}
\caption{Histograms of numbers of digits encoded into one-hot vectors (that represent categorical latent variable). VAE with MI maximization. }
\label{fig:hists_mi}
\end{figure}

\begin{figure}[h]
\centering 
\subfigure[One-hot 1]{\includegraphics[width=55mm]{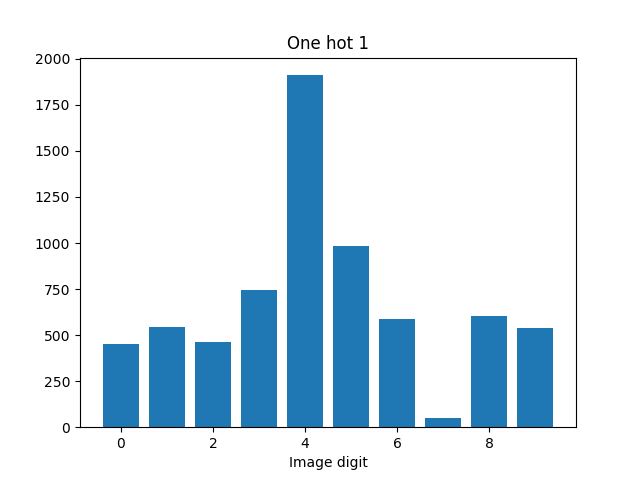}}
\subfigure[One-hot 2]{\includegraphics[width=55mm]{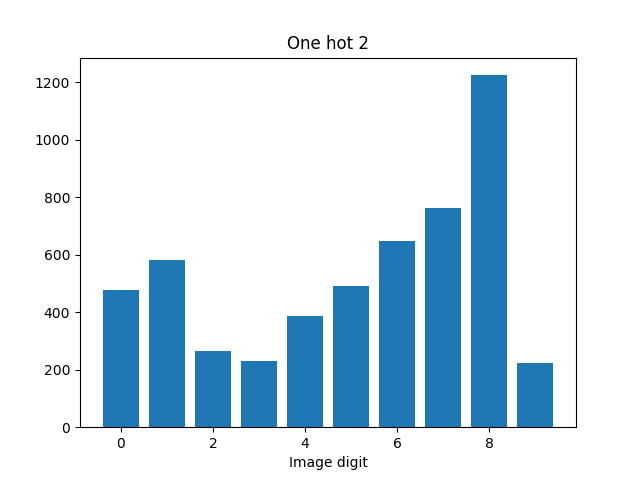}}
\subfigure[One-hot 3]{\includegraphics[width=55mm]{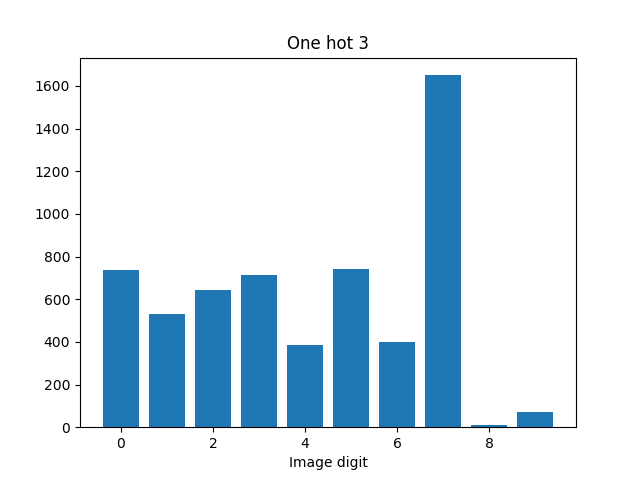}}
\subfigure[One-hot 4]{\includegraphics[width=55mm]{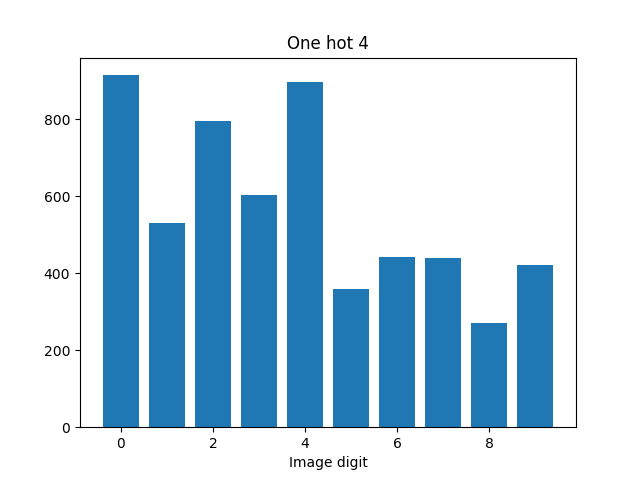}}
\subfigure[One-hot 5]{\includegraphics[width=55mm]{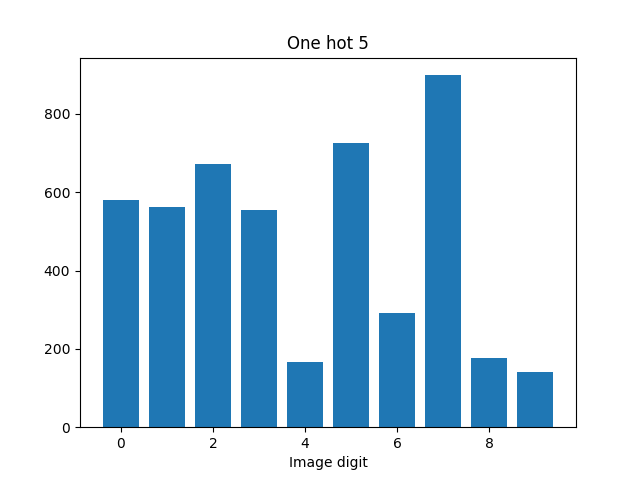}}
\subfigure[One-hot 6]{\includegraphics[width=55mm]{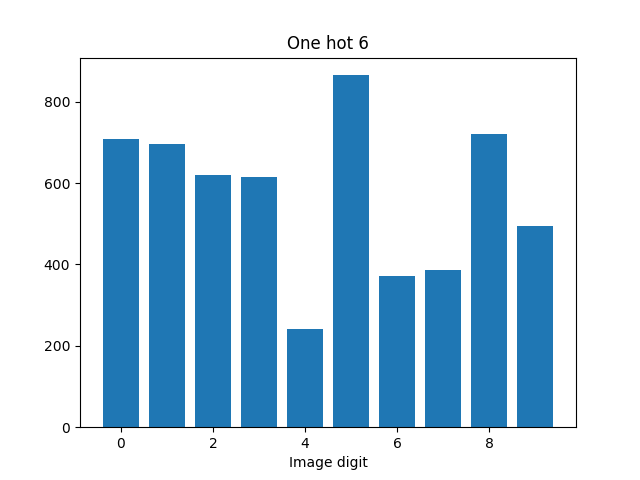}}
\subfigure[One-hot 7]{\includegraphics[width=55mm]{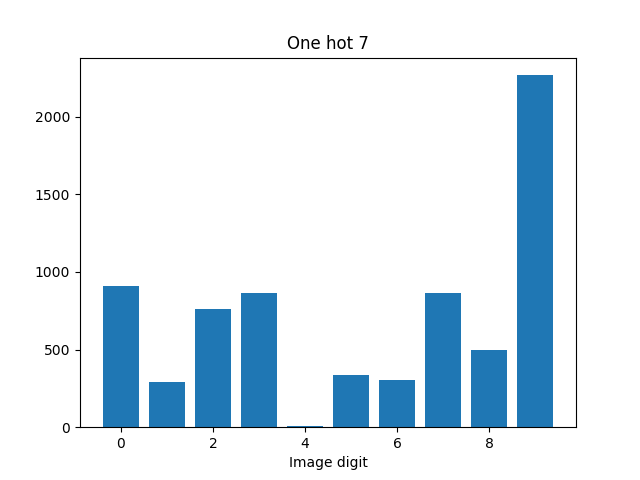}}
\subfigure[One-hot 8]{\includegraphics[width=55mm]{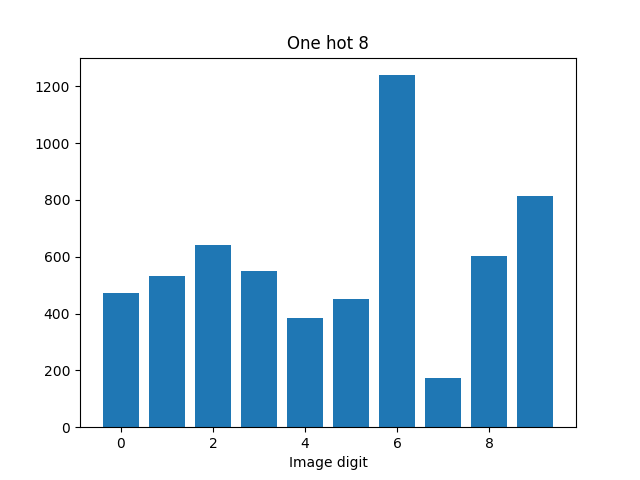}}
\subfigure[One-hot 9]{\includegraphics[width=55mm]{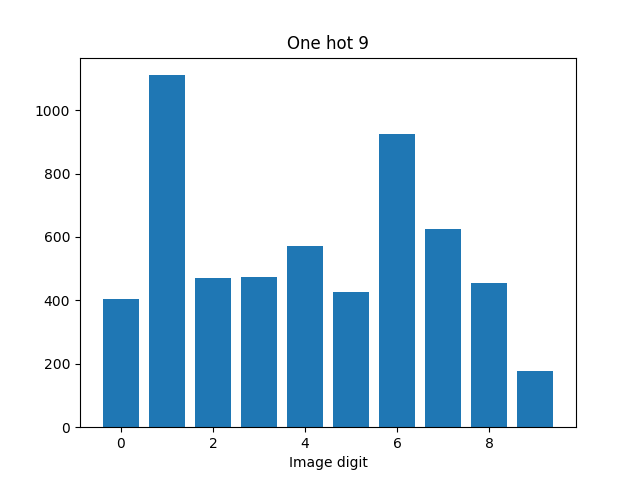}}
\subfigure[One-hot 10]{\includegraphics[width=55mm]{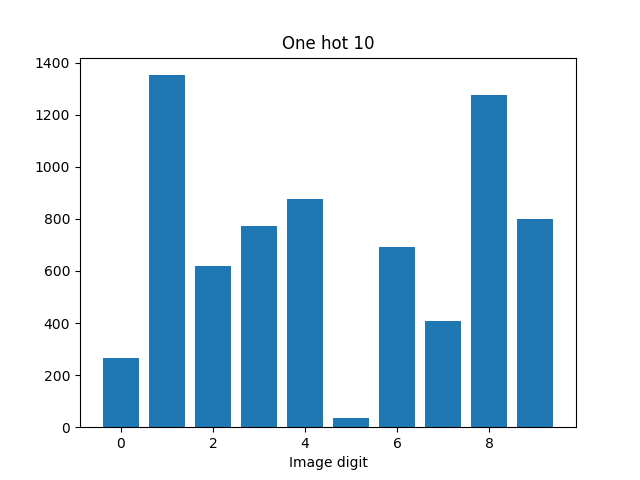}}
\caption{Histograms of numbers of digits encoded into one-hot vectors (that represent categorical latent variable). VAE without MI maximization.}
\label{fig:hists_nomi}
\end{figure}

\end{document}